\documentclass[conference]{IEEEtran}
\IEEEoverridecommandlockouts
\usepackage{cite}
\usepackage{amsmath,amssymb,amsfonts}
\usepackage{algorithmic}
\usepackage{graphicx}
\usepackage{textcomp}
\usepackage{xcolor}
\def\BibTeX{{\rm B\kern-.05em{\sc i\kern-.025em b}\kern-.08em
    T\kern-.1667em\lower.7ex\hbox{E}\kern-.125emX}}

\usepackage{latexsym}
\usepackage{graphicx}
\usepackage{verbatim}
\usepackage{amsmath, amsthm, amssymb}
\usepackage[linesnumbered,ruled,vlined]{algorithm2e}
\usepackage[pagewise]{lineno}
\usepackage{color}
\usepackage{makecell}
\usepackage{multirow}
\usepackage{warpcol}
\usepackage{dcolumn}
\usepackage{subfigure}

\begin{document}

\title{A Note on Graph-Based Nearest Neighbor Search}

\author{Hongya Wang$^\dag$ \ \  Zhizheng Wang$^\dag$ \ \ Wei Wang$^\ddag$ \ \ Yingyuan Xiao$^\S$ \ \ Zeng Zhao$^\dag$ \ \ Kaixiang Yang$^\dag$ \\
$^\dag$ School of Computer Science and Technology, Donghua University, China\\  $\ddag$ School of Computer Science and Engineering, University of New South Wales, Australia \\ $\S$School of Computer Science and Engineering, Tianjin University of Technology, China \\ \{hywang@dhu.edu.cn\}

}

\maketitle

\newtheorem{example}{Example}
\newtheorem{observation}{Observation}
\newtheorem{lemma}{Lemma}
\newtheorem{theorem}{Theorem}
\newtheorem{problem}{Problem}
\newtheorem{definition}{Definition}
\newtheorem{corollary}{Corollary}
\newtheorem{conjecture}{Conjecture}

\begin{abstract}
Nearest neighbor search has found numerous applications in machine learning, data mining and massive data processing systems. The past few years have witnessed the popularity of the graph-based nearest neighbor search paradigm because of its superiority over the space-partitioning algorithms. While a lot of empirical studies demonstrate the efficiency of graph-based algorithms, not much attention has been paid to a more fundamental question: why graph-based algorithms work so well in practice? And which data property affects the efficiency and how? In this paper, we try to answer these questions. Our insight is that ``the probability that the neighbors of a point $o$ tends to be neighbors in the $K$NN graph" is a crucial data property for query efficiency. For a given dataset, such a property can be qualitatively measured by \emph{clustering coefficient} of the $K$NN graph.


To show how clustering coefficient affects the performance, we identify that, instead of the global connectivity, the local connectivity around some given query $q$ has more direct impact on recall. Specifically, we observed that high clustering coefficient makes most of the $k$ nearest neighbors of $q$ sit in a maximum strongly connected component (SCC) in the graph. From the algorithmic point of view, we show that the search procedure is actually composed of two phases - the one outside the maximum SCC and the other one in it, which is different from the widely accepted single or multiple paths search models. We proved that the commonly used graph-based search algorithm is guaranteed to traverse the maximum SCC once visiting any point in it. Our analysis reveals that high clustering coefficient leads to large size of the maximum SCC, and thus provides good answer quality with the help of the two-phase search procedure. Extensive empirical results over a comprehensive collection of datasets validate our findings.

\end{abstract}

\section{Introduction}
\label{section:intro}

Nearest neighbor search among database vectors for a query is a key building block to solve problems such as large-scale image search and information retrieval, recommendation, entity resolution, and sequence matching. As database size and vector dimensionality increase, exact nearest neighbor search becomes expensive and often is considered impractical due to the long search latency. To reduce the search cost, approximate nearest neighbor (ANN) search is used, which provides a better tradeoff among accuracy, latency, and memory overhead.

Roughly speaking, the existing ANN methods can be classified into space-partitioning algorithms and graph-based ones\footnote{Please note the categorization is not fixed/unique}. The space-partitioning methods further fall into three categories - the tree-based, production quantization (PQ) and locality sensitive hashing (LSH)\cite{DatarIIM04:p-stable, JegouDS11:product-quantization}. Recent empirical study shows that graph-based ANN search algorithms are more efficient than the space-partitioning methods such as PQ and LSH, and thus have been adopted by many commercial applications in Facebook, Microsoft, Taobao and etc.~\cite{DouzeSJ18:link-and-code, DBLP:conf/iclr/DongIRW20, FuXWC19:nsg}.

While a lot of empirical studies validate the efficiency of graph-based ANN search algorithms, not much attention has been paid to a more fundamental question: why graph-based ANN search algorithms are so efficient? And which data property affect the efficiency and how? Two recent papers analyze the asymptotic performance of graph-based methods for datasets uniformly distributed on a $d$-dimensional Euclidean sphere~\cite{Laarhoven18:graph-time-space-tradeoff, Prokhorenkova19:graph-practice-to-theory}. The worst-case analysis shows that the asymptotic behavior of a greedy graph-based search only matches the optimal hash-based algorithm~\cite{AndoniLRW17:optimal-hash-based-tradeoff}, which is far worse than the practical performance of graph-based algorithms and thus could not answer these questions.

A few conceptual graph models such as Monotonic Search Network Model~\cite{DearholtGK88:msnet}, Delaunay Graph Model~\cite{SebastianK02:metric-based-nn-retrieval, MorozovB18:similarity-graph-for-maximum-inner-product-search} and Navigable Small World Model~\cite{Kleinberg00:navigation-in-a-small-world, MalkovPLK14:navigable-small-world-network} are proposed to inspire the construction of ANN search graphs. As will be discussed in Section~\ref{section:review-of-graph-based-searching}, none of them can explain the success of graph-based algorithms either. Actually, the vast majority (if not all) of practical ANN search graphs uses approximate $K$NN graph as the index structure instead of the conceptual models due to time or space constraints, and thus is fully devoid of the theoretical features provided by these models.

In this paper, we argue that, for a specific dataset, the clustering coefficient~\cite{Watts98:dynamics-of-small-world-network} of its $K$NN graph is an important indicator on how efficiently graph-based algorithms work. The clustering coefficient of $K$NN graph defines the probability of neighbors of a point are also neighbors. Comprehensive experimental results reveal that higher the clustering coefficient is, more efficiently the graph-based algorithms will perform. Since clustering coefficient is data dependent, graph-based algorithms perform rather worse for datasets such as Random with very small clustering coefficient, whereas do well in datasets such as Sift and Audio with greater ones.

We also study how clustering coefficient affects the performance. The analysis of the complex network indicates that large clustering coefficient leads to high global connectivity~\cite{Newman00:networks-an-intro}. Our insight is that, instead of the global connectivity, the local graph structure is more crucial for high ANN search recall. Particularly, we observed that, for datasets with large clustering coefficient, most of the $k$NN of some given query\footnote{This query can be in or not in the dataset} lie in the maximum strongly connected component (SCC) in the subgraph composed of these $k$NN. Moreover, we show that the search procedure actually consists of two phases, the one outside the maximum SCC and the one in it, in contrast to the common wisdom of single or multiple path search models. Then, we proved that the commonly used graph search algorithm is guaranteed to visit all $k$NN in the maximum SCC under a mild condition, which suggests that the size of the maximum SCC determines the answer quality of $k$NN search. This sheds light on the strong positive correlation between the clustering coefficient and the result quality, and thus answers the two aforementioned questions.

To sum up, the main contributions of this paper are:

\begin{itemize}

\item We introduce a new quantitative measure \emph{Clustering Coefficient} of $K$NN graph for the difficulty of graph-based nearest neighbor search. To the best of our knowledge, this is the first measure which could explain the efficiency of this class of important ANN search algorithms.

  \item The conceptual models such as MSNETs and Delaunay graphs claim that NN could be found by walking in a single path. Instead, we found that the search procedure is actually composed of two phases. In the second phase the algorithm traverse the maximum SCC of $k$NN for a query, of which the size is a determining factor for answer quality, i.e., recall.

   \item We proved that the graph-based search algorithm is guaranteed to visit all points in the maximum SCC once entering it. Extensive empirical results over a comprehensive collection of datasets validate our observations.

\end{itemize}

We believe that this note could provide a different perspective on graph-based ANN search methods and might inspire more interesting work along this line.

\section{Graph-Based Nearest Neighbor Search}
\label{section:review-of-graph-based-searching}


\subsection{Graph Construction and Search Algorithms}
\label{section:graph-construction}
In the sequel, we will use nodes, points and vertices interchangeably without ambiguity. A directed graph $G =(V, E)$ consists of a nonempty vertices set $V$ and a set $E$ of edges such that each edge $e \in E$ is assigned to an ordered pair \{$u,v$\} with $u,v \in V$. Most graph-based algorithms build directed graph to navigate the $k$NN search. To the best of our knowledge, the idea of using graphs to process ANN search can be traced back to the Pathfinder project, which was initiated to support efficient search, classification, and clustering in computer vision systems~\cite{DearholtGK88:msnet}. In this project, Dearholt et al. designed and implemented the \emph{monotonic search network} to retrieve the best match of an entity in a collection of geometric objects. Since then, researchers from different communities such as theory, database and pattern recognition explored different ways to construct search graphs, inspired by various graph/network models such as the \emph{relative neighborhood graph}~\cite{AryaM93:approximate-nn-in-fixed-dimensions, Jaromczyk:rng-and-their-relatives}, \emph{Delaunay graph}~\cite{Navarro99:nn-searching-by-spatial-approximation, SebastianK02:metric-based-nn-retrieval, Aurenhammer91:voronoi-diagrams-a-survey}, \emph{KNN graph}~\cite{ParedesC05:knn-graph-search-in-metric-spaces, KGraph-github-project} and \emph{navigable small world network}~\cite{MalkovPLK14:navigable-small-world-network, MalkovY20:hierarchical-navigable-SMG, Kleinberg00:swn-an-algorithmic-perspetive}. Thanks to its appealing practical performance, the graph-based ANN search paradigm has become an active research direction and quite a lot of new approaches are developed recently~\cite{FuXWC19:nsg, LiZSWZL16:DPG, HajebiASZ11:fast-NN-search-with-kNN-graph, AumullerBF20:ann-benchmark, HarwoodD16:fanng, AoyamaSSU11:degree-reduced-NN-graph, Iwasaki18:onng, Iwasaki16:panng, Baranchuk19:graphs-construction-by-reinforcement-learning, BaranchukPSB19:learning-to-route-in-graphs, ZhouTX019:mobius-transformation-for-NN-search-on-graph, DouzeSJ18:link-and-code, WangL12:iterated-NN-graph-search}.

While motivated by different graph/network models, most (if not all) practical graph-based algorithms essentially use approximate $K$NN graphs as the core index structure. A specific algorithm distinguishes itself from the others mainly in the edge selection heuristics, i.e., the way to select a subset of links between any point and its neighbors. Algorithm~\ref{alg:Graph-construction} depicts the general framework of index construction for graph-based ANN search.

\begin{algorithm}[h]
\label{alg:Graph-construction}
\caption{\textsf{The General Graph Construction Framework}($V$, $K$)}
\KwIn{$V$ is the vertex set and $K$ is used to control the graph qualilty}
\KwOut{The Search Graph $G$}
\For {each $v \in V$}{
Find the exact or approximate $K$ nearest neighbors of $v$\;
Choose a subset $C$ of these $K$NNs based on some specific heuristics\;
Add directed or bi-directional connections between $v$ and every vertex in $C$\;  \tcp{update $G$}
}
\Return $G$
\end{algorithm}

For almost all graph-based methods, the ANN search procedure is based on the same principle as follows. For a query $q$, start at an initial vertex chosen arbitrarily or using some sophisticated selection rule. Moves along an edge to the adjacent vertex with minimum distance to $q$. Repeat this step until the current element $v$ is closer to $q$ than all its neighbors, and then report $v$ as the NN of $q$. We call this the \emph{single path search} model. {Figure~\ref{Fig:graph-search-without-backtracking} illustrates the searching procedure for $q$ in a sample graph with points $o_1$ to $o_6$, where $o_1$ is the starting point and dashed lines indicate the search path. At the end of searching, the NN of q, i.e., $o_6$ is identified.}

\begin{figure}[http]
\center
 \includegraphics[width= 0.6\columnwidth]{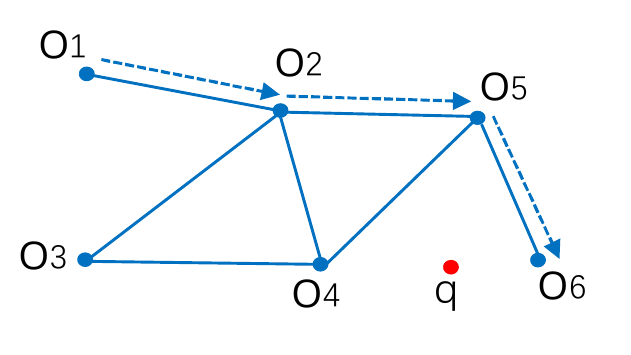}
\caption{\textbf{An illustrative example of graph search}}
\label{Fig:graph-search-without-backtracking}
\end{figure}

For practical ANN search networks, e.g., HNSW and NSG, there is no guarantee that a monotonic search path always exist for any given query~\cite{MalkovY20:hierarchical-navigable-SMG, FuXWC19:nsg}. As a result, it can be easy to get trapped into the local optima, meaning that $v$ is not the NN of $q$. To address this issue, backtracking is solicited -- we need to go back to visited vertices and find another outgoing link to restart the procedure. We call this the \emph{multiple path search} model. Algorithm~\ref{alg:Graph-based-KNN-Search} sketches the commonly adopted search algorithm that allows for backtracking, which will be discussed in detail in Section~\ref{section:the-analysis-of-search-algorithm}. {Figure~\ref{Fig:graph-search-with-backtracking} illustrates a search path with backtracking. The starting point is $o_1$ and $o_2$ is the local optima since the true NN of $q$ is $o_4$. By backtracking the algorithm gets back to $o_3$, which is further to $q$ than $o_2$ and finally find the true NN of $q$.}

\begin{figure}[http]
\center
 \includegraphics[width= 0.6\columnwidth]{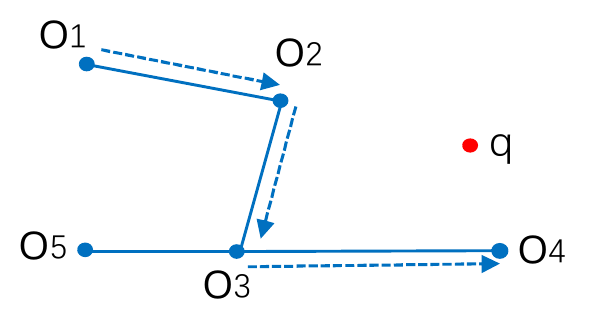}
\caption{\textbf{An illustrative example of graph search with backtracking}}
\label{Fig:graph-search-with-backtracking}
\end{figure}

\begin{algorithm}[h]
\label{alg:Graph-based-KNN-Search}
\caption{\textsf{Graph-based $k$NN Search}($G$, $s$, $q$, $L$)}
\KwIn{Graph $G$, entry vertex $s$, query $q$, priority queue of size $L$}
\KwOut{$k$ nearest neighbors of $q$}
$cand$.push($s$); \tcp{add $s$ to the priority queue of candidates}
$result_L$.push($s$); \tcp{add $s$ to the priority queue that stores $L$ nearest points to $q$}
\While{$|cand| > 0$} {
$v$ = $cand$.top(); $cand$.pop(); \tcp{$v$ is the nearest point in $cand$ to $q$}
$o$ = $result_L$.bottom(); \tcp{$o$ is the furthest point in $result_L$ to $q$}
\If {$d(v,q) > d(o,q)$} {
break; \tcp{all points in $result_L$ are evaluted}}
\For {each neighbor $e$ of $v$ in $G$}{
\If {$e \notin visited$}{
$cand$.push($e$)\;
$result_L$.push($e$)\;
}
}
$visited$.push($v$); \tcp{add $v$ to the visited set}
}
\Return the top $k$ points in $result_{L}$
\end{algorithm}

Please note that $L$ is often greater than $k$ to achieve better answer quality. For ease of presentation, we assume $k=L$ throughout this paper unless stated otherwise.
\subsection{Review of Graph Search Models and Their Limitations}
\label{section:gap}
While empirical studies demonstrate that the graph-based ANN search algorithms are very competitive, it is widely recognized that the graph-based methods are mostly based on heuristics and not well understood quanlitatively~\cite{Baranchuk19:graphs-construction-by-reinforcement-learning, FuXWC19:nsg}. As an exception, two recent papers take the first step to analyze the asymptotic performance of graph-based methods for datasets uniformly distributed on a $d$-dimensional Euclidean sphere~\cite{Laarhoven18:graph-time-space-tradeoff, Prokhorenkova19:graph-practice-to-theory}. The worst-case analysis shows that the asymptotic behavior of a greedy graph-based search only matches the optimal hashing-based algorithm~\cite{AndoniLRW17:optimal-hash-based-tradeoff}.

It was experimentally observed that the graph-based methods are orders of magnitude faster than the hashing-based algorithms~\cite{AumullerBF20:ann-benchmark}. Thus, though interesting from a pure theoretical perspective, their theory fails to explain the salient practical performance of the graph-based algorithms. Next, we will review several graph/network models that inspire practical graph-based algorithms, and then point out their limitations.

\textbf{Monotonic Search Network Model.} The monotonic search networks (MSNET) are defined as a family of graphs such that, for any two vertices in the graph, there exists at least one monotonic search path between them~\cite{DearholtGK88:msnet}. If the query point happens to be equal
to a point of S, then a simple greedy search will succeed in locating the query point along a path of monotonically decreasing distance to the query point. The original MSNET is not practical, even for datasets of moderate size, because of its $O(n^3)$ indexing complexity and unbounded average out-degree~\cite{DearholtGK88:msnet}. A recent proposal, the \emph{monotonic relative neighborhood graph}, reduces the graph construction time to $O(n^2\log n)$. This, however, still does not make MSNETs applicable in practice.

\textbf{Delaunay Graph Model.} Given a set of elements in a Euclidean space, the Voronoi diagram associates a Voronoi region with each element, which gives rise to a notion of neighborhood. The significance of this neighborhood is that if a query is closer to a database element than all its neighbors, then we have found the nearest element in the whole database~\cite{SebastianK02:metric-based-nn-retrieval, MorozovB18:similarity-graph-for-maximum-inner-product-search}. Delaunay graph is the dual of the Voronoi diagram, where each element $p$ is connected to all elements that share a Voronoi edge with $p$. Using the Delaunay graph, Algorithm~\ref{alg:Graph-based-KNN-Search} is guaranteed to find the NN of $q \notin V$. Unfortunately, the worst-case combinational complexity of the Voronoi diagram in dimension $d$ grows as $\Theta(n^{d/2})$~\cite{PreparataS85:computational-geometry-an-intro}. In addition, the Delaunay graph quickly reduces to the complete graph as $d$ grows, making it infeasible for NN search in high dimension spaces~\cite{Navarro99:nn-searching-by-spatial-approximation}.

\textbf{Navigable Small World Model.} Networks with logarithmic or polylogarithmic complexity of the greedy graph search are known as the navigable small world networks~\cite{Kleinberg00:navigation-in-a-small-world, Kleinberg00:swn-an-algorithmic-perspetive, MalkovPLK14:navigable-small-world-network}. Inspired by this model, Malkov et al. proposed the navigable small world graph (NSW) and hierarchical NSW by introducing ``long" links during the approximate $K$NN graph construction, expecting that the greedy routing achieves polylogrithmic complexity for NN search~\cite{MalkovY20:hierarchical-navigable-SMG}. They demonstrate that the number of hops during the graph routing is polylogrithmic with respect to the network size on a collection of real-life datasets experimentally. However, unlike the ideal navigable small world model, no rigorous theoretical analysis is provided for NSW and HNSW.

We argue that these conceptual models are inadequate in explaining why in most cases the search procedure quickly converges to the nearest neighbor since:
\begin{itemize}

\item For ideal models, the MSNET alone gives no hint how the graph-based methods generalize to out-of-sample queries, i.e., queries that are not in $V$. Delaunay graph supports out-of-sample queries, but do not guarantees NN could be found for query $q$ $\in V$. For example, suppose Algorithm~\ref{alg:Graph-based-KNN-Search} can reach $q$ by traversing one monotonic search path $sv_1v_2 \cdots v_iq$ from $s$ to $q$, we actually have no idea whether $v_i$ is the NN of $q$ at all because there may be multiple monotonic search pathes and NN of $q$ may lie in some other pathes. Navigable small world model only gives intuitive explanations on the existence of short search pathes but have no quantitative justifications that why the NN of $q$ can be found.

  \item More importantly, the vast majority of graph-based algorithms uses approximate $K$NN graph or its variants, instead of the aforementioned conceptual models, as the index structure. By limiting the maximum out-degree, approximate $K$NN graphs are far more sparse than MSNETs and Delaunay graphs, which makes it is fully devoid of the nice the theoretical properties, i.e., the existence of the monotonic search path or Voronoi neighborhood.

\end{itemize}

To sum up, the existing models fail to illuminate the intuitive appeal of the graph-based methods. We view this as a significant gap between the theory and practice of the graph-based search paradigm. In this paper, we try to explain more quantitatively, from a different perspective, why the approximate $K$NN graph-based methods work so well in practice.

\section{Clustering Coefficient of $K$NN Graph and Its Impact on Search Performance}
\label{section:clustering-coefficient}
In graph theory, the \emph{clustering coefficient} is a measure of the degree to which nodes in a graph tend to cluster together, which has been used successfully in its own right across a wide range of applications in complex networks. To name a few, Burt uses local clustering as a probe for the existence of so-called ``structural holes" in a network, and Dorogovtsev et al. found that $C_i$ falls off with $k_i$ approximately as $k_i^{-1}$ for certain models of scale-free networks~\cite{Newman00:networks-an-intro}, where $k_i$ is the degree of $v_i$.

There are different ways to define clustering coefficient. In this paper, we adopt the commonly used definition given by Watts and Strogatz~\cite{Watts98:dynamics-of-small-world-network}. The \emph{local clustering coefficient} $C_i$ of a vertex $v_i$ is defined as
\\

\begin{equation}
\label{equation:local-clustering-coefficient}
C_i  = \frac{\text{number of pairs of neighbors of } v_i \text{ that are connected}}{\text{number of pairs of neighbors of } v_i}
\end{equation}
\\

To calculate $C_i$, we go through all distinct pairs of vertices that are neighbors of $v_i$ in the network, count the number of such pairs that are connected to each other, and divide by the total number of pairs $\frac{k_i(k_i-1)}{2}$. {Figure~\ref{Fig:example-of-clustering-coefficient} illustrates the definition of the local clustering coefficient $C_i$. The degree of $v_i$ is 4 and there exists two edges between its neighbors. Hence, by definition $C_i = 2 \div \frac{ 4 \times (4-1)}{2} = \frac{1}{3}$.}

\begin{figure}[http]
\center
 \includegraphics[width= 0.4\columnwidth]{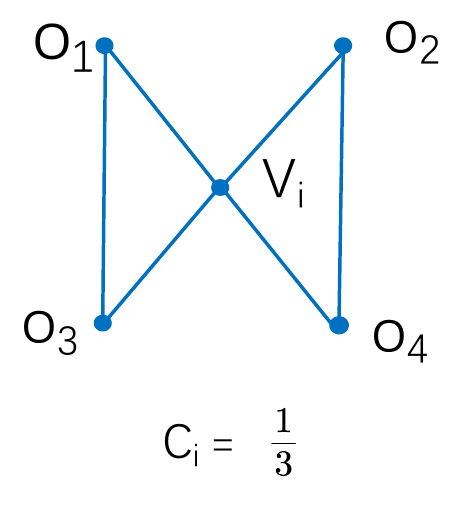}
\caption{\textbf{An illustrative example of local clustering coefficient}}
\label{Fig:example-of-clustering-coefficient}
\end{figure}

The clustering coefficient for the whole network is the average
\\

\begin{equation}
\label{equation:clustering-coefficient}
C  = \frac{1}{n} \sum \limits_{i=1}^{n} C_i
\end{equation}
\\

The local clustering coefficient $C_i$ of a vertex $v_i$ describes the likelihood that the neighbours of $v_i$ are also connected, i.e., the probability that two randomly selected neighbors of $v_i$ are
neighbors with each other.  Roughly speaking, it tells how well the neighborhood of the node is connected. If the neighborhood is fully connected, the local clustering coefficient is 1 and a value close to 0 means that there are hardly any connections in the neighborhood. If most of the nodes in the network have high clustering coefficient, then the network will probably have many edges that connect nodes to each other.

Clustering coefficient of $K$NN graphs depends on $K$ and the intrinsic feature of datasets. Table~\ref{table:clustering-coefficient-for-different-k} lists the clustering coefficients for various $K$ under three typical datasets
As one can see, the larger $K$ is, the greater clustering coefficient will be. Moreover, the relative order of the clustering coefficient is stable independent of $K$. In the sequel, we will use the clustering coefficient in the case of $K=50$ as the default since $K$ cannot be too large due to the index space constraint.

\begin{table*}[htbp]
\large
\caption{Clustering coefficient for different $K$ under three typical datasets}
\centering
\begin{tabular}{c|c|c|c|c|c}
\hline
 Dataset &  $K=20$ & $K=50$ & $K=100$ & $K=150$ & $K=200$\\
\hline
\hline
Sift & 0.1159 & 0.1249 & 0.1371 & 0.1419 & 0.1468\\

Glove & 0.0881 & 0.1029 & 0.1289 & 0.1358 & 0.1427\\

Random & 0.00047 & 0.00074 & 0.00092 & 0.00114 & 0.00139\\

\hline
\end{tabular}
\label{table:clustering-coefficient-for-different-k}
\end{table*}

Our key observation is that the \emph{clustering coefficient} of $K$NN graphs is a informative measure for the efficiency of the graph-based ANN search methods. In this paper, a $K$NN graph is defined as a graph such that for each vertices $v$, there exists bi-directional edges with its $K$ most nearest neighbors. This model is reasonable because practical graph-based algorithms such as HNSW and NSG always add bi-directional links between a point and its $K$NN as much as possible~\cite{MalkovY20:hierarchical-navigable-SMG, FuXWC19:nsg}. Table~\ref{table:clustering-coefficient-vs-efficiency} lists the statistics of the datasets, the clustering coefficients of $K$NN graph in increasing order, the recalls of the top-$50$ query and the average number of hops in the graph for a collection of datasets under HNSW and NSG, the two state-of-the-art graph-based algorithms\footnote{statistics are collected over 1000 random queries}. For both methods, the maximum out-degrees (MOD) is 70 and the parameter $L$, which control the number of hops in the graph, is set to 50. Interesting observations can be made as follows:

\begin{itemize}

\item NSG consistently outperforms HNSW in recall with slightly greater average number of hops, which approximately translates to the number of distance evaluation since the MODs are identical for both algorithms. This observation agrees with the results reported in~\cite{FuXWC19:nsg}.

\item A more interesting observation is that, with around the same number of average hops in the graph, clustering coefficient and recall are strongly correlated. Particularly, The Pearson correlation coefficients between the clustering coefficient and recall for NSG and HNSW are 0.794 and 0.762, respectively. Besides, independent of the data cardinality and dimensionality, high clustering coefficient (greater than 0.12) often leads to high recall whereas low clustering coefficient (lower than 1.0) results in low recall. As an extreme example, the clustering coefficient of Random dataset is only 0.00074 and thus makes graph-based algorithms become very inefficient. One reason that the recall of NSG is greater than that of HNSW is that the quality of neighbors of NSG is better than HNSW, that is, NSG is much closer to an exact $K$NN graph than HNSW. Please note that the datasets are comprehensive enough in terms of size, dimensionality and data types (images, text, audio and synthetic). Detailed description of these datasets can be found in~\cite{LiZSWZL16:DPG}\footnote{https://github.com/DBWangGroupUNSW/}.
\end{itemize}

\begin{table*}[!htbp]
\large
\caption{Clustering coefficient vs. Efficiency}
\centering
\begin{tabular}{c c c c | c c | c c}
\hline
\multirow{2}{*}{Dataset} & \multirow{2}{*}{Size} & \multirow{2}{*}{Dim} & \multirow{2}{*}{Clustering coefficient} & \multicolumn{2}{c|}{HNSW} & \multicolumn{2}{c}{NSG}  \\
\cline{5-8}
     & & & & Recall & \# of Hops & Recall & \# of Hops \\
\hline
\hline
Random & 1,000,000 & 128 & 0.00074 & 0.0049 & 61.3\ & 0.02 & 64.8\ \\
\hline
\hline
Gist & 1,000,000 & 960 & 0.080 & 0.5984 &  54.9\ & 0.7688 & 54.7\ \\

NUSWIDE & 268,643 & 500 & 0.096 & 0.4343 &  58.0 \ & 0.5430 & 59.8\ \\

GLOVE & 1,192,514 & 100 & 0.103 & 0.4903 &  60.3\ & 0.694 & 56.1\ \\

ImageNet & 2,340,373 & 150 & 0.114 & 0.6643 &  53.3\ & 0.8608 & 55.5\ \\

\hline
\hline

Sift & 1,000,000 & 128 & 0.125 & 0.8667 &  52.0\ & 0.9453 & 54.2\ \\

Sun & 79,106 & 512 & 0.140 & 0.8941 &  51.1\ & 0.9562 & 52.0\ \\

Cifar & 50,000 & 512 & 0.141 & 0.9196 &  51.0 \ & 0.9685 & 51.5\ \\

Deep & 1,000,000 & 256 & 0.144 & 0.8205 &  52.7\ & 0.9078 & 54.7\ \\

MillionSong & 992,272 & 420 & 0.163 & 0.5984 &  51.4\ & 0.9608 & 55.1\ \\

Ukbench & 1,097,907 & 128 & 0.189 & 0.8893 &  51.7\ & 0.9545 & 54.5\ \\

Enron & 94,987 & 1369 & 0.209 & 0.7599 &  52.3\ & 0.9421 & 53.3\ \\

Trevi & 99,900 & 4096 & 0.215 & 0.8845 &  51.1\ & 0.9498 & 52.8\ \\

AUDIO & 53,387 & 192 & 0.253 & 0.9553 &  51.0\ & 0.9815 & 52.5\ \\

MINIST & 69,000 & 784 & 0.286 & 0.9728 &  51.7\ & 0.9878 & 53.2\ \\

Notre & 332,668 & 128 & 0.287 & 0.9248 &  52.4\ & 0.9674 & 53.8\ \\
\hline
\end{tabular}
\label{table:clustering-coefficient-vs-efficiency}
\end{table*}

These observations suggest that the clustering coefficient are a promising measure for the efficiency of graph-based algorithms. Intuitively, higher the clustering coefficient of $K$NN graph is\footnote{$K$ should be as small as possible to reduce the memory footprint and query efficiency}, the better the graph is connected, which means that graph connectivity has significant impact on the result quality of ANN search. To have an in-depth understanding of how connectivity affects the search performance, we scrutinized the graph traversal steps of a sample of queries and found that the local connectivity, instead of the global one, is the determining factor. To formally characterize the local connectivity, we propose the notion of \emph{maximum strongly connected neighborhood} as follows.

\begin{definition}
\label{definition:strongly-connected-component}
A directed graph is strongly connected if there is a path between all pairs of vertices. A strongly connected component (SCC) of a directed graph is a strongly connected subgraph in this graph.
\end{definition}

\begin{definition}
\label{definition:neighborhood-of-a-node}
The $k$-neighborhood of a point $v$, denoted by $\mathcal{N}_k (v)$, is the set of $k$ nearest elements of $v$, i.e., $o_1  \cdots o_k \in V$.
\end{definition}

Please note that the only requirement is the $k$ nearest neighbors of $v$ belongs to $V$ and $v$ may be some point in or not in $V$. This definition makes our analysis supports out-of-sample queries.

\begin{definition}
\label{definition:neighborhood-subgraph}
A subgraph of $G$ is the $k$-neighborhood subgraph associated with a vertex $v$, denoted by $\mathcal{G}_k (v)$, if $V(\mathcal{G}_k (v)) = \mathcal{N}_k (v)$ and $E(\mathcal{G}_k (v)) \subseteq E(G)$.
\end{definition}

\begin{definition}
\label{definition:strongly-connected-component}
The maximum strongly connected neighborhood of $\mathcal{G}_k (v)$, denoted by $\mathcal{C}_k (v)$, is an SCC of $\mathcal{G}_k (v)$ such that $|\mathcal{C}_k (v)| \ge |\mathcal{C}_i|$ for all $i$, where $\mathcal{C}_i$ are the SCCs of $\mathcal{G}_k (v)$.
\end{definition}

Please note that $K$ and $k$ owns totally different meaning - $K$ is the link number per node of $K$NN graph and is determined in graph construction and $k$ is the search parameter and thus may vary according to the users' requirement.

{Figure~\ref{Fig:example-of-strongly-connected-neighborhood} illustrates these definitions using a simple example. $o_1$ to $o_5$ is the top-5 NN of query $q$ and there are three undirected edges (equivalent to six directed edges) in $q$'s 5-neighborhood subgraph $\mathcal{G}_5 (q)$. Three SCCs exists in $\mathcal{G}_5 (q)$ and the maximum SCC $\mathcal{C}_5 (q)$ is composed of point $o_1$, $o_2$ and $o_3$.}

\begin{figure}[http]
\center
 \includegraphics[width= 0.5\columnwidth]{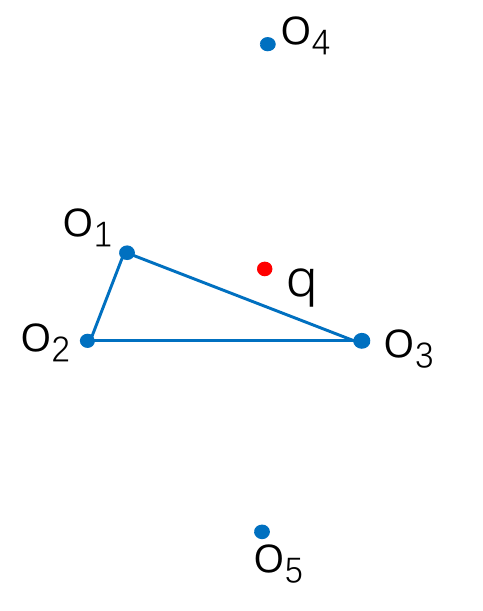}
\caption{\textbf{An illustrative example of maximum strongly connected neighborhood}}
\label{Fig:example-of-strongly-connected-neighborhood}
\end{figure}

To show the impact of $\mathcal{C}_k (v)$ on the algorithm performance. Table \ref{table:SCC-of-a-random-query-for-sift} lists the 3 SCCs of largest sizes for 100 random $k$NN queries in the case of $k=50$ for three typical datasets. As we can see, the ratios of the size of $\mathcal{C}_k (v)$ (SCC1) to $k$ are very close the recall listed in Table~\ref{table:clustering-coefficient-vs-efficiency} for three datasets, respectively. Other state-of-the-art algorithms, such as HNSW, exhibits similar trends.

\begin{table*}[!htbp]
\large
\caption{SCCs of 100 random queries for three typical datasets under NSG}
\centering
\centering
\begin{tabular}{c || c c | c c | c c }
\hline
\multirow{2}{*}{SCC-id} & \multicolumn{2}{c|}{Sift} & \multicolumn{2}{c|}{Glove} & \multicolumn{2}{c}{Random} \\
\cline{2-7}
     & size & ratio & size & ratio & size & ratio\\
\hline
SCC1 & 47.8 & 95.6\% & 33.8 & 67.6\% & 2.4 & 4.8\%\\
SCC2 & 0.8 & 1.6\% & 1.8 & 3.6\% & 1.4 & 2.8\%\\
SCC3 & 0.2 & 0.4\% & 1.8 & 3.6\% & 1.2 & 2.4\% \\
\hline
\end{tabular}
\label{table:SCC-of-a-random-query-for-sift}
\end{table*}

To eliminate the bias caused by specific graph construction algorithms, we studied the exact $K$NN graph and found similar results. Table~\ref{table:ideal-directional-maximum-strongly-connected-neighborhood} lists, for 100 random top-50 NN queries, the average sizes of the top-3 SCCs over Sift, Glove and Random. In this experiment, we only put a directed link from a point to its $K$NN and no link is added manually in the reverse direction, i.e., the $K$NN graph is directed. $K$ is set to 50. From Table~\ref{table:ideal-directional-maximum-strongly-connected-neighborhood} we can see that, independent of specific graph-based algorithm, clustering coefficient also has a significant impact on the size of $\mathcal{C}_k (v)$.

\begin{table*}[!htbp]
\large
\caption{SCCs of 100 random queries for exact directed $K$NN graph}
\centering
\centering
\begin{tabular}{c || c c | c c | c c }
\hline
\multirow{2}{*}{SCC-id} & \multicolumn{2}{c|}{Sift} & \multicolumn{2}{c|}{Glove} & \multicolumn{2}{c}{Random} \\
\cline{2-7}
     & size & ratio & size & ratio & size & ratio\\
\hline
SCC1 & 36.8 & 73.6\% & 30.2 & 60.4\% & 1.2 & 2.4\%\\
SCC2 & 3.6 & 7.2\% & 2.4 & 4.8\% & 1 & 2\%\\
SCC3 & 1.6 & 3.2\% & 1.6 & 3.2\% & 1 & 2\% \\
\hline
\end{tabular}
\label{table:ideal-directional-maximum-strongly-connected-neighborhood}
\end{table*}

We also examined undirected $K$NN graph, where bi-directional link is added manually between a point and its $K$NN. The trend listed in Table~\ref{table:ideal-bi-directional-maximum-strongly-connected-neighborhood} is very similar to that of Table~\ref{table:ideal-directional-maximum-strongly-connected-neighborhood} except that the sizes of $\mathcal{C}_k (v)$ are larger. This is because more links are added in the graph. Actually, practical graph-based algorithms lie somewhere between the undirected and directed $K$NN graph since they always try to add bi-directional links as long as the memory budget is enough. Please note exact $K$NN graphs are not practical because the unaffordable construction time and unlimimted maximum out-degree, which translates to too much memory cost.

\begin{table*}[!htbp]
\large
\caption{SCCs of 100 random queries for exact undirected $K$NN graph}
\centering
\centering
\begin{tabular}{c || c c | c c | c c }
\hline
\multirow{2}{*}{SCC-id} & \multicolumn{2}{c|}{Sift} & \multicolumn{2}{c|}{Glove} & \multicolumn{2}{c}{Random} \\
\cline{2-7}
     & size & ratio & size & ratio & size & ratio\\
\hline
SCC1 & 48.8 & 97.6\% & 46.2 & 92.4\% & 6 & 12\%\\
SCC2 & 0.2 & 0.4\% & 1 & 2\% & 2.2 & 4.4\%\\
SCC3 & 0 & 0\% & 0.2 & 0.4\% & 1.4 & 2.8\% \\
\hline
\end{tabular}
\label{table:ideal-bi-directional-maximum-strongly-connected-neighborhood}
\end{table*}

In a nutshell, all these experiments demonstrate that clustering coefficient of $K$NN graph is an informative measure for the size of the maximum strongly connected neighborhood and the performance of graph-based algorithms over a specific dataset. Next, we will analyze how $\mathcal{C}_k (q)$ affects the recall for a given query $q$. Particularly, we will show that Algorithm~\ref{alg:Graph-based-KNN-Search}, the striking algorithmic component for graph search, can effectively reach $\mathcal{C}_k (q)$ and identify all $k$NN $\in \mathcal{C}_k (q)$. This explains why greater clustering coefficient and larger size of $\mathcal{C}_k (q)$ lead to better performance.

\section{Two Phase $k$NN Search in Graphs}
\label{section:the-analysis-of-search-algorithm}

\begin{table*}[htbp]
\large
\caption{Statistics of two-phase ANN search for Sift}
\centering
\begin{tabular}{c|c|c|c|c|c}
\hline
Query ID & \# of Hops in $P_1$ & \# of Hops in $P_2$ & $|\mathcal{C}_k (q)|$ & Fraction of $|\mathcal{C}_k (q)|$ visited in $P_2$ & $|\overline{\mathcal{C}_k (q)}|$\\
\hline
\hline
1 & 4 & 50 & 38 & 100\% & 4\\

2 & 4 & 50 & 43 & 100\% & 1\\

3 & 5 & 50 & 48 & 100\% & 0\\

4 & 4 & 50 & 50 & 100\% & 0\\

5 & 5 & 50 & 45 & 100\% & 0\\
\hline
\end{tabular}
\label{table:an-illustration-of-two-phase-search-for-sift}
\end{table*}

\begin{table*}[htbp]
\large
\caption{Statistics of two-phase ANN search for GLOVE}
\centering
\begin{tabular}{c|c|c|c|c|c}
\hline
Query ID & \# of Hops in $P_1$ & \# of Hops in $P_2$ & $|\mathcal{C}_k (q)|$ & Fraction of $|\mathcal{C}_k (q)|$ visited in $P_2$ & $|\overline{\mathcal{C}_k (q)}|$\\
\hline
\hline
1 & 3 & 52 & 27 & 100\% & 7\\

2 & 1 & 59 & 28 & 100\% & 0\\

3 & 11 & 39 & 20 & 100\% & 8\\

4 & 2 & 49 & 38 & 100\% & 4\\

5 & 2 & 48 & 24 & 100\% & 6\\
\hline
\end{tabular}
\label{table:an-illustration-of-two-phase-search-for-glove}
\end{table*}

\begin{table*}[htbp]
\large
\caption{Statistics of two-phase ANN search for Random}
\centering
\begin{tabular}{c|c|c|c|c|c}
\hline
Query ID & \# of Hops in $P_1$ & \# of Hops in $P_2$ & $|\mathcal{C}_k (q)|$ & Fraction of $|\mathcal{C}_k (q)|$ visited in $P_2$ & $|\overline{\mathcal{C}_k (q)}|$\\
\hline
\hline
1 & 79 & 7 & 1 & 100\% & 0\\

2 & 83 & 0 & 1 & 0\% & 0\\

3 & 50 & 47 & 1 & 100\% & 0\\

4 & 46 & 25 & 2 & 100\% & 0\\

5 & 56 & 0 & 1 & 0\% & 0\\
\hline
\end{tabular}
\label{table:an-illustration-of-two-phase-search-for-random}
\end{table*}

The common wisdom about Algorithm~\ref{alg:Graph-based-KNN-Search} is as follows. Starting from the entry vertex $s$, which is chosen by random or using some auxiliary method, Algorithm~\ref{alg:Graph-based-KNN-Search} finds a directed path from $s$ to the query $q$, hoping that NN of $q$ are identified through the walk. Since only local information, i.e., adjacent vertices of the visited vertices, is used, this class of algorithms are termed as the \emph{decentralized} algorithm~\cite{Kleinberg00:navigation-in-a-small-world}. Particularly, for ANN search, Algorithm~\ref{alg:Graph-based-KNN-Search} first follows the out-edges of $s$ to get its immediate neighbors, and then examines the distances from these neighbors to $q$. The one with the minimum distance to $q$ is selected as the next base vertex for iteration. The same procedure is repeated at each step of the traversal until Algorithm~\ref{alg:Graph-based-KNN-Search} reaches a local optima, namely, the immediate neighbors of the base vertex does not contain a vertex that is closer to $q$ than the base vertex itself. Backtracking is used to jump out of the local optima and increase the odd to find the true NN. Recall that we name this search paradigm as multiple path search model.

Different from the traditional point of view, we observe that Algorithm~\ref{alg:Graph-based-KNN-Search} actually is composed of two phases. In the first phase, the algorithm starts with an initial point, walks the graph and encounters a point within $\mathcal{C}_k (q)$. In the second phase, the algorithm traverse $\mathcal{C}_k (q)$ and a small number of points not in $\mathcal{C}_k (q)$. {Figure~\ref{Fig:two-phases-of-nn-search}} depicted the two phase search procedure.  Theorem~\ref{theorem:traverse-the-strongly-connected-component} proves that Algorithm~\ref{alg:Graph-based-KNN-Search} is guaranteed to find all points in $\mathcal{C}_k (q)$ under a mild condition.

\begin{figure}[http]
\center
 \includegraphics[width= 0.8\columnwidth]{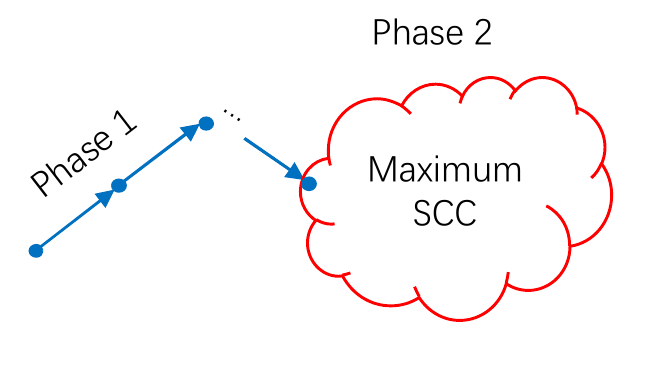}
\caption{\textbf{Two-phases ANN graph search}}
\label{Fig:two-phases-of-nn-search}
\end{figure}

\begin{theorem}
\label{theorem:traverse-the-strongly-connected-component}
Algorithm~\ref{alg:Graph-based-KNN-Search} is guaranteed to visit all points in $\mathcal{C}_k (q)$ starting with any point in $\mathcal{C}_k (q)$.
\end{theorem}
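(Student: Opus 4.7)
The plan is to combine two observations: (i) $\mathcal{C}_k(q)$ is strongly connected in $\mathcal{G}_k(q)$ by Definition~4, so there is a directed path in $G$ between any pair of its vertices that stays entirely inside $\mathcal{C}_k(q)$; and (ii) every vertex of $\mathcal{C}_k(q)$ lies in $\mathcal{N}_k(q)$, i.e., is one of the $k$ true nearest neighbors of $q$. The first observation drives an induction showing that once one vertex of $\mathcal{C}_k(q)$ enters the candidate queue, every other vertex of $\mathcal{C}_k(q)$ will follow. The second observation is used to show that Algorithm~\ref{alg:Graph-based-KNN-Search} cannot terminate through its break statement while any vertex of $\mathcal{C}_k(q)$ is still sitting unprocessed in $cand$.

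The central lemma I would prove first is the following inequality. At any moment of execution with $|result_L|=L=k$, if $o$ denotes the bottom of $result_L$ and $w$ is any vertex of $\mathcal{C}_k(q)$, then $d(w,q)\le d(o,q)$. The argument is a one-line pigeonhole: if the inequality failed, the $L=k$ elements sitting in $result_L$ would all be strictly closer to $q$ than $w$, producing at least $k$ points of $V$ strictly closer to $q$ than $w$ and contradicting $w\in\mathcal{N}_k(q)$. The ``mild condition'' alluded to in the preceding section is precisely what makes this lemma hold during the warm-up phase when $|result_L|<L$: under the standard convention that $d(o,q)=+\infty$ whenever $result_L$ is not yet full (used in every practical implementation), the inequality holds unconditionally throughout the run.

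With the lemma in hand, the proof proceeds by induction on the length of a shortest path inside $\mathcal{C}_k(q)$ from the initial vertex $v_0$. The base case is immediate since $v_0$ is pushed into $cand$ at initialization. For the inductive step, suppose $v\in\mathcal{C}_k(q)$ has been popped and expanded: when $v$ was popped, the break test $d(v,q)>d(o,q)$ failed by the lemma, so the neighbor-expansion loop executed, and every out-neighbor of $v$ lying in $\mathcal{C}_k(q)$ was either already in $visited$ or pushed into $cand$. It remains to argue that every such pushed $w\in\mathcal{C}_k(q)$ is actually popped before the algorithm halts. Suppose for contradiction that the algorithm breaks while $w$ is still in $cand$; the break must have fired on some minimum $u\in cand$ with $d(u,q)>d(o,q)$, and because $u$ is the minimum we have $d(w,q)\ge d(u,q)>d(o,q)$, again contradicting the lemma applied to $w$. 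Hence every vertex reachable from $v_0$ inside $\mathcal{C}_k(q)$ is eventually processed, and by strong connectivity this exhausts $\mathcal{C}_k(q)$.

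The main obstacle I foresee is the interplay between the break condition and the bookkeeping of $result_L$: while $result_L$ has fewer than $L$ entries the quantity $d(o,q)$ can be misleadingly small, and in principle the break test could fire ``prematurely'' before the pigeonhole inequality has any teeth. Making the argument watertight therefore requires either (a) the explicit $+\infty$ convention for an unsaturated $result_L$, or (b) a separate check that once the search enters $\mathcal{C}_k(q)$ enough neighbors are pushed to fill $result_L$ before any such premature break. I would take route (a) as the mild condition, noting that it matches real implementations, and observe that (b) holds generically whenever $|\mathcal{C}_k(q)|$ is not tiny. Beyond this technicality, the remaining ingredients — the pigeonhole inequality and the strong-connectivity induction — are short and essentially mechanical.
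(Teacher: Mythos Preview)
Your argument is correct and in fact cleaner than the paper's own proof. The paper does not use your pigeonhole inequality $d(w,q)\le d(o,q)$ derived from $w\in\mathcal{N}_k(q)$; instead it \emph{assumes} as its ``mild condition'' that every vertex adjacent to $\mathcal{C}_k(q)$ but outside it is farther from $q$ than every vertex inside $\mathcal{C}_k(q)$, and then argues that the algorithm walks around a cycle covering $\mathcal{C}_k(q)$ while the bottom of $result_L$ (initialized to $+\infty$) stays larger than the top of $cand$. Your route replaces that geometric assumption on the boundary of $\mathcal{C}_k(q)$ by the purely combinatorial fact that no $k$ points can all be strictly closer to $q$ than a member of $\mathcal{N}_k(q)$, which is both weaker as a hypothesis and makes the break-test analysis airtight even when vertices outside $\mathcal{C}_k(q)$ get interleaved into $cand$. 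The paper's approach buys a slightly more vivid picture (``follow the cycle''), but yours actually proves a stronger statement under a weaker side condition; both rely on the $+\infty$ initialization of $result_L$, which the paper also invokes in a footnote.
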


\begin{proof}
We know that any directed graph is said to be a strongly connected component iff all the vertices of the graph are a part of some cycle. Please note that the cycle may not necessarily be a Hamilton cycle. Suppose all vertices not in $\mathcal{C}_k (q)$ but adjacent with vertices in $\mathcal{C}_k (q)$ are further to $q$ than all vertices in $\mathcal{C}_k (q)$. Without loss of generality, suppose the first vertex visited is $v_1$, then Algorithm~\ref{alg:Graph-based-KNN-Search} will visit all vertices following the cycle and push every vertices into $cand$ and $result_{L}$. Since all vertices in $\mathcal{C}_k (q)$ are closer to $q$ than other vertices, the distance of the bottom element of $result_{L}$ will always greater than that of the top element in $cand$ until all vertices in $\mathcal{C}_k (q)$ are visited\footnote{all elements in $result_{L}$ are initialized as infinity at the beginning}. Please note that in each loop $cand$ pop up the element that have been pushed into $result_{L}$, which guarantees that Algorithm~\ref{alg:Graph-based-KNN-Search} always terminates.
\end{proof}

Theorem~\ref{theorem:traverse-the-strongly-connected-component} suggests a different perspective in understanding the graph-based methods. Rather than searching a single path (without backtracing) or multiple paths (with backtracking) in the graph, the search algorithm actually traverses a strongly connected neighborhood around the query. In other words, high quality $\mathcal{C}_k (q)$, together with Algorithm~\ref{alg:Graph-based-KNN-Search}, offers the salient performance. The analysis in Section~\ref{section:clustering-coefficient} reveals the quality of $\mathcal{C}_k (q)$ are data dependent and closely related to the clustering coefficient of $K$NN graphs. Therefore, there exists significant performance disparity for different datasets and we could use the clustering coefficient of $K$NN graph as a meaningful measure for the efficiency of the graph-based methods.

It is possible that there exists a few vertices adjacent with vertices in $\mathcal{C}_k (q)$, which is not in $\mathcal{C}_k (q)$ but  closer to $q$ than some vertices in $\mathcal{C}_k (q)$. In this case, the algorithm may also visit such vertices, and the answer quality will be higher than just traversing $\mathcal{C}_k (q)$ since more closer vertices outside $\mathcal{C}_k (q)$ are visited.

The probability of the search algorithm getting into $\mathcal{C}_k (q)$ is exponentially increasing with $L^\prime$, the number of being trapped into a local optima and getting back to a distant point before entering $\mathcal{C}_k (q)$, which is expressed as follows. $p_i$ is the probability of getting into $\mathcal{C}_k (q)$ along a single path.

\begin{equation}
\label{equation:prob-of-entering-the-second-phase}
P  = 1- \prod \limits_{i=1}^{L^\prime} (1-p_i)
\end{equation}

The rigorous calculation of $P$ is infeasible. Empirically, for datasets with relatively large clustering coefficients we observed that (1) Algorithm~\ref{alg:Graph-based-KNN-Search} can quickly reach $\mathcal{C}_k (q)$, and (2) the path length of the first phase is far shorter than that of the second phase. Table~\ref{table:an-illustration-of-two-phase-search-for-sift}, Table~\ref{table:an-illustration-of-two-phase-search-for-glove} and Table~\ref{table:an-illustration-of-two-phase-search-for-random} list the numbers of hops in Phase 1 and Phase 2, the size of $\mathcal{C}_k (q)$, the fraction of points in $\mathcal{C}_k (q)$ that are visited in Phase 2 and the number of true top-$k$ NN not in $\mathcal{C}_k (q)$ that are found during Phase 2 (denoted by $|\overline{\mathcal{C}_k (q)}|$) for Sift, Glove and Random, respectively. NSG is used and the statistics of five random query are reported. Please note that HNSW and exact $K$NN graph exhibit the similar trends thus we do not report results for them. Several interesting observations are made:

\begin{itemize}

\item As we can see, independent of datasets, the two-phase search model is applicable to all queries listed. As proved in Theorem~\ref{theorem:traverse-the-strongly-connected-component}, once the search algorithm enters Phase 2, all points in $\mathcal{C}_k (q)$ will be visited, which demonstrates the importance of the quality of $\mathcal{C}_k (q)$.

\item Besides the true top-$k$ NN in $\mathcal{C}_k (q)$, other true top-$k$ NN also probably be visited during Phase 2, especially for Glove where the size of $\mathcal{C}_k (q)$ is relatively small. This is mainly caused by the search algorithm jumps into a smaller SCC or visits $k$NN that only own a single directed edge with the maximum SCC.

\item For Sift and Glove, where the size of $\mathcal{C}_k (q)$ far greater than that of Random, the second phase dominates the search cost and the algorithm jumps into $\mathcal{C}_k (q)$ in a very small number of steps. In contrast, it is very hard for the algorithm to find a true top-$k$ NN for Random since the size of $\mathcal{C}_k (q)$ is too small (in most case is equal to 1). For example, $q_2$ and $q_5$ do not enter Phase 2 and didn't find any true NN. As a result, the recall of Random is very low.
\end{itemize}

\begin{figure}[http]
\center
 \includegraphics[width= 1\columnwidth]{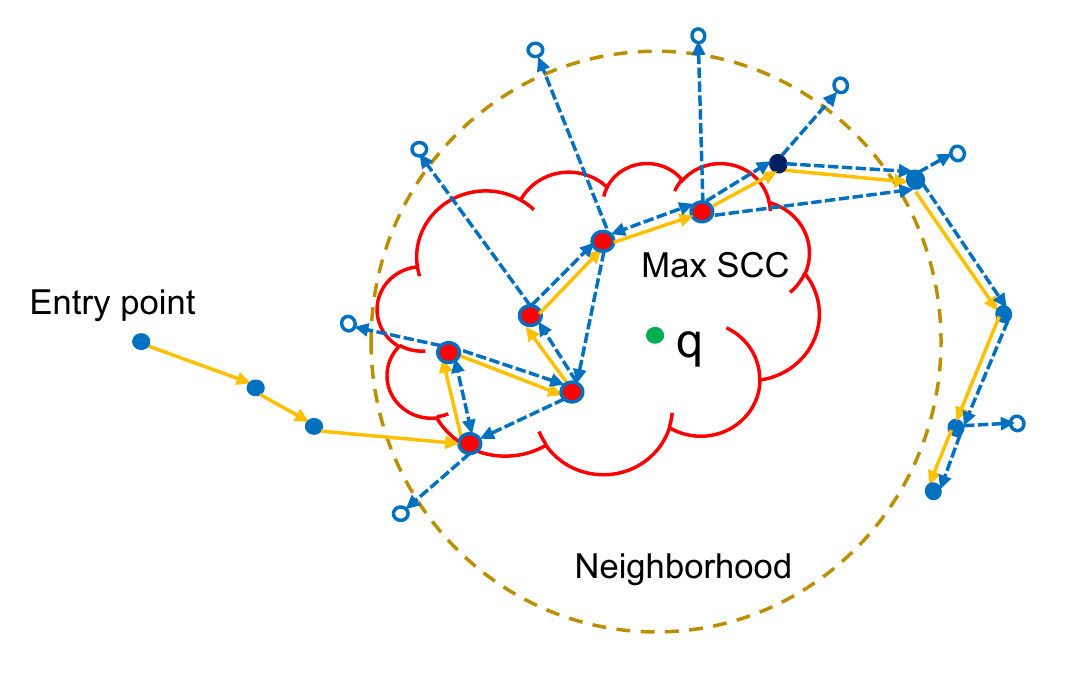}
\caption{\textbf{An example of two-phase ANN graph search (best viewed in color)}}
\label{Fig:example-two-phases-of-nn-search}
\end{figure}

To train reader's intuition, Figure~\ref{Fig:example-two-phases-of-nn-search} illustrates the search procedure of a top-10 query for Sift dataset with NSG. Green point is the query and red points denote the true top-NN in the maximum SCC, which are strongly connected. Dashed lines in blue with single or double arrows represents the the directed or bi-directional edges between points. The solid arrowed lines in yellow depict the search path during $k$NN search. As we can see, starting with the entry point, the algorithm jumps into the maximum SCC in three steps. After traversing the maximum SCC which consists of six true NNs, it continues the search by visiting one true NN (in black) and four other points before the termination condition is met. Since $k$ is small in this example, the size of the maximum SCC is not that large. This can be informally explained by the fact that the connectivity increases as the number vertices become large under the same edge connection probability using random network theory~\cite{Newman00:networks-an-intro}.

\textbf{The case of small $k$:} Users may be only interested in a small number of nearest neighbors of $q$, say $k$ ranging from 1 to 10. In this case, the size and quality of $\mathcal{C}_k (q)$ is not that good to achieve high recall. To get precise results, $L$ is often set greater than $k$, say 50-200. The net effect is that the search algorithm actually visit $\mathcal{C}_L (q)$, which consists the most points of top-$L$ NN if the clustering coefficient is large enough, and then Algorithm 2 identify the best $k$ results and output them.

\section{Related Work}
\label{section:related}

\subsection{Measrues for Difficulty of Nearest Neighbor Search}
The problem of the difficult (approximate) of NN search in a given dataset has drawn much attention in recent years. Beyer et al. and Francois et al. show that NN search wil be meaningless when the number of dimensions goes to infinity~\cite{BeyerGRS99:when-NN-meaningful, FrancoisWV07:the-concentration-of-NN-search}, respetively. However, they didn't discuss the non-asymptotic analsisi when the number of dimensions is finite. Moreover, the effect of other crucial properties such as the sparsity of data vectors has not been studied. To the best of our knowledege, He et al. proposed the first concrete measure called Relative Contrast (RC) to evaluate the influence of several data characteristics such as dimensionality, sparsity and dataset size simultaneously on the difficulty of NN search~\cite{He2012:on-the-difficulty}. They present a theoretical analysis to prove how RC determines the complexity of Locality Sensitive Hashing, a popular approximate NN search method. Relative Constrast aslo provides an explanation for a family of heristic hashing algorithm with good practical performance based on PCA. However, no evidence is given that RC can be used to explain the success of graph-based NN search method directly.

Identifying the intrinsic dimensionality (ID) of datasets has been studied for decades since its importance in machine learning, databases and data mining. Recently, local ID gains much attention since it is very useful when data is composed of heterogeneous manifolds. In addition to applications in manifold learning, measures of local ID have been used in the context of evaluate the difficulty of NN search~\cite{HouleN15:ranking-based-similarity-search}. Several local intrinsic dimensionality models have been proposed, such as the expansion dimension (ED)~\cite{DBLP:conf/stoc/KargerR02}, the generalized expansion dimension (GED)~\cite{DBLP:conf/icdm/HouleKN12}, the minimum neighbor distance (MiND)~\cite{DBLP:journals/ml/RozzaLCCC12}, local continuous intrinsic dimension (LID)~\cite{DBLP:conf/icdm/Houle13}. While these measures have been shown useful in their own right, non of them is applicable in explaining the salient performance of the graph-based methods.

\subsection{A Brief Review of the Existing ANN Search Methods}

Approximate nearest neighbor search (ANNS) has been a hot topic over decades, it provides fundamental support for many applications of data mining, databases and information retrieval \cite{ArefCFEHIMZ02:video-database, FaginKS03:efficient-similarity-search, KeSH04:efficient-near-duplicate-image-retrieval-lsh}.
There is a large amount of significant literature on algorithms for approximate nearest neighbor search, which are mainly divided into the following categories: tree-structure based approaches, hashing-based approaches, quantization-based approaches, and graph-based approaches.

\subsubsection{tree-structure based approaches}
Hierarchical structures (tree) based methods offer a natural way to continuously partition a dataset into discrete regions at multiple scales, such as KD-tree \cite{Bentley90:k-d-tree}, R-tree \cite{Guttman84:r-tree}, SR-tree \cite{KatayamaS97:sr-tree}. These methods perform very well when the
dimensionality of the data is relatively low. However, it has been proved to be inefficient when the dimensionality of data is high. It has been shown in \cite{WeberSB98:a-quantitative-analysis-for-nn-search} that when the dimensionality exceeds about 10, existing indexing data structures based on space partitioning are slower than the brute-force, linear-scan approach.
Many new hierarchical-structure-based methods \cite{RamS19:revisiting-kd-tree} are presented to address this limitation.

\subsubsection{hashing-based approaches}
Among the approximate NN search algorithms, the Locality Sensitive Hashing is the most widely used one due to its excellent theoretical guarantees and empirical performance. E2LSH, the classical LSH implementations for $\ell_2$ norm, cannot solve $c$-ANN search problem directly. In practice, one has to either assume there exists a ``magical' radius $r$, which can lead arbitrarily bad outputs, or uses multiple hashing tables tailored for different raddi, which may lead to prohibitively large space consumption in indexing. To reduce the storage cost, LSB-Forest~\cite{TaoYSK09:lsb-tree} and C2LSH~\cite{GanFFN12:collision-counting} use the so-called virtual rehashing technique, implicitly or explicitly, to avoid building physical hash tables for each search radius. The index size of LSB-Forest is far greater than that of C2LSH because the former ensures that the worst-case I/O cost is sub-linear to both $n$ and $d$ whereas the latter has no such guarantee - it only bounds the number of candidates by some constant but ignores the overhead in index access.

Based on the idea of query-aware hashing, the two state-of-the-art algorithms QALSH and SRS further improve the efficiency over C2LSH by using different index structures and search methods, respectively.
SRS uses an $m$-dimensional $R$-tree (typically $m \le 10$) to store the
$\langle g(o), oid \rangle$ pair for each point $o$ and transforms the
$c$-ANN search in the $d$-dimensional space into the range query in the
$m$-dimensional projection space. The rationale is that the probability that a point $o$ is the NN of $q$ decreases as $\Delta_m(o)$ increases, where $\Delta_{m}(o) = \|g_m(q)-g_m(o)\|$ During $c$-ANN search, points are
accessed according to the increasing order of their $\Delta_m(o)$.

Motivated by the observation that the optimal
$\ell_p$ metric is application-dependent, LazyLSH~\cite{ZhengGTW16:LazyLSH} is proposed to solve the NN search problem for the fractional distance metrics, i.e., $\ell_p$ metrics ($0 < p < 1$) with a single index. FALCONN is the state-of-the-art LSH scheme for the angular distance, both theoretically and practically~\cite{AndoniILRS15:FALCONN}. Except for E2LSH and FALCONN, the other algorithms are disk-based and thus can handle datasets that do not fit into the memory.

All of the aforementioned LSH algorithms provide probability guarantees on the result quality (recall and/or precision). To achieve better efficiency, many LSH extensions such as Multi-probe LSH~\cite{LvJWCL07:multi-probe-lsh}, SK-LSH~\cite{LiuCHLS14:sk-lsh}, LSH-forest~\cite{BawaCG05:lsh-forest} and Selective hashing~\cite{GaoJOW15:selective-hashing} use heuristics to access more plausible buckets or re-organize datasets, and do not ensure any LSH-like theoretical guarantee.

\subsubsection{quantization-based approaches}
The most common quantization-based methods is product quantization (PQ) \cite{JegouDS11:product-quantization}. It seeks to perform a similar dimension reduction to hashing algorithms, but in a way that better retains information about the
relative distances between points in the original vector space. Formally, a quantizer is a function q mapping a $D$-dimensional vector $x\in \mathbb{R}^{D}$ to a vector $q(x)\in C = \{c_i; i \in \mathcal{I}\}$, where the index set $\mathcal{I}$ is finite: $\mathcal{I}=0 \ldots k-1$. The reproduction values $c_i$ are called centroids. The set $\mathcal{V}_{i}$ of vectors mapped to given index $i$ is referred to as a cell, and defined as
\begin{displaymath}
  \mathcal{V}_{i} \triangleq\left\{x \in \mathbb{R}^{D}: q(x)=c_{i}\right\}
\end{displaymath}
The $k$ cells of a quantizer form a partition of $\mathbb{R}^{D}$. So all the vectors lying in the same cell $\mathcal{V}_{i}$ are reconstructed by the same centroid $c_i$. Due to the huge number of samples required and the complexity of learning the quantizer, PQ uses $m$ distinct quantizers to quantize the subvectors separately. An input vector will be divided into m distinct subvectors $u_j$, $1 \leq j \leq m$. The dimension of each subvector is $D^{*} = D/m$. An input vector x is mapped as follows:
\begin{displaymath}
\footnotesize
  \underbrace{x_{1}, \ldots, x_{D^{*}}}_{u_{1}(x)}, \cdots, \underbrace{x_{D-D^{*}+1}, \ldots, x_{D}}_{u_{m}(x)} \rightarrow q_{1}\left(u_{1}(x)\right), \ldots, q_{m}\left(u_{m}(x)\right)
\end{displaymath}
where $q_j$ is a low-complexity quantizer associated with the $j^{th}$ subvector. And the codebook is defined as the Cartesian product,
\begin{displaymath}
  \mathcal{C}=\mathcal{C}_{1} \times \ldots \times \mathcal{C}_{m}
\end{displaymath}
and a centroid of this set is the concatenation of centroids of the $m$ subquantizers. All subquantizers have the same finite number $k^{*}$ of reproduction values, the total number of centroids is $k=\left(k^{*}\right)^{m}$.

After using PQ, all database vectors will be replaced by reproduction values. In order to speed up the query, PQ proposes a look-up table to directly get the distance between the reproduction values and the query vector. They propose two methods to compute an approximate Euclidean distance between these vectors: the so-called Asymmetric Distance Computation (ADC) and the Symmetric Distance Computation (SDC). See Figure~\ref{Fig:SDC-and-ADC} for an illustration. Let’s take the introduction of ADC as an example.
\begin{figure}[t]
  \centering
  \includegraphics[scale = 0.2]{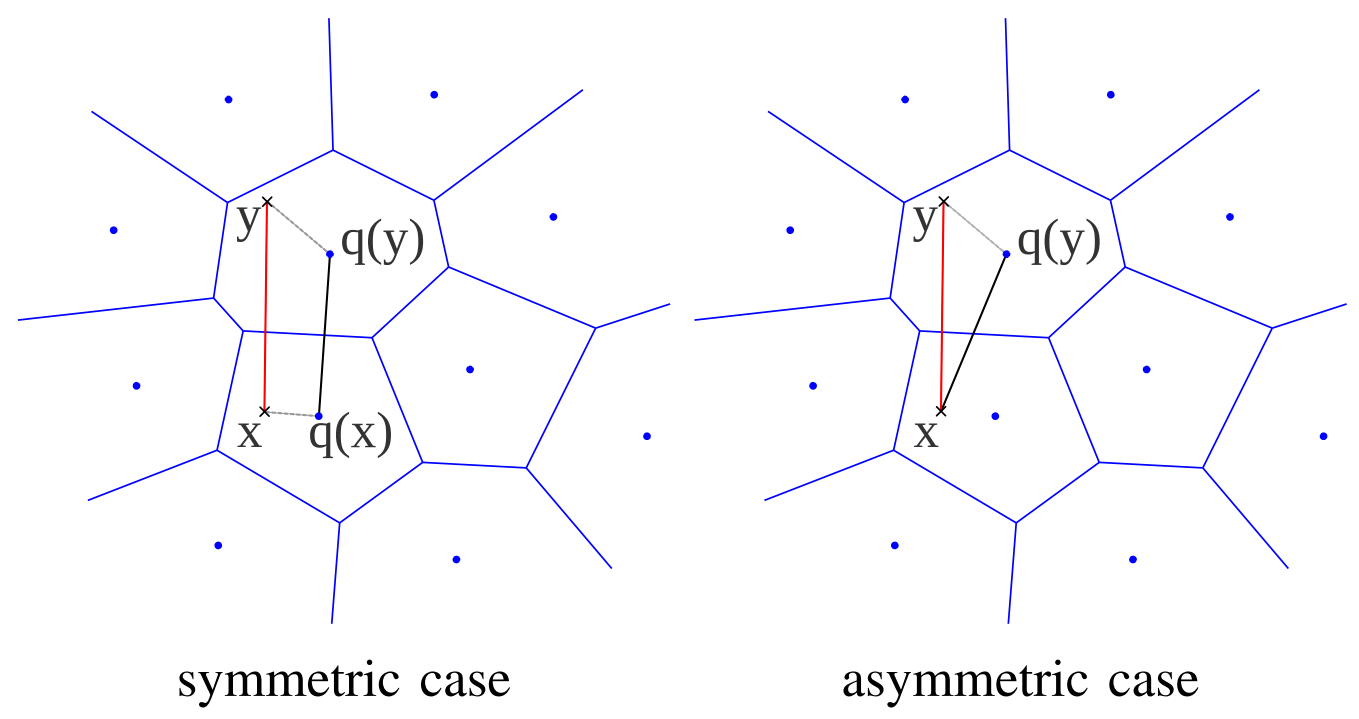}
  \caption{Two methods to compute an approximate Euclidean distance}
  \label{Fig:SDC-and-ADC}
\end{figure}

The database vector $y$ is represented by $q(y)$, but the query $x$ is not encoded. The distance $d(x,y)$ is approximated by the distance $d(x,q(y))$, which is computed using the decomposition
\begin{displaymath}
  d(x, q(y))=\sqrt{\sum_{j} d\left(u_{j}(x), q_{j}\left(u_{j}(y)\right)\right)^{2}},
\end{displaymath}
where the squared distances $d\left(u_{j}(x), c_{j, i}\right)^{2}: j= 1 \ldots m, i=1 \ldots k^{*}$, are computed before the search.
The calculation method of SDC is similar to ADC, but the query vector $x$ is represented by $q(x)$. SDC limits the memory usage associated with the queries and ADC has a lower distance distortion for a similar complexity.

PQ offers three attractive properties: (1) PQ compresses an input vector into a short code (e.g., 64-bits), which enables it to handle typically one
billion data points in memory; (2) the approximate distance between a raw vector and a compressed PQ code is computed efficiently (the so-called asymmetric distance computation (ADC) and the symmetric distance computation (SDC)), which is a good estimation
of the original Euclidean distance; and (3) the data structure and coding algorithm are simple, which allow it to hybridize with other indexing structures. Becasue these methods avoid distance calculations on the
original data vectors, it will cause a loss of certain calculation accuracy. When the recall rate is close to 1.0, the required length of the candidate list is close to the size of the dataset. Many quantization-based methods
try to reduce quantization errors to improve calculation accuracy, such as Optimal Product Quantization (OPQ) \cite{ge2013optimized} and Tree Quantization (TQ) \cite{babenko2015tree}.

\subsubsection{graph-based approaches}
Recently, graph-based methods have drawn considerable attention, such as NSG \cite{FuXWC19:nsg}, HNSW \cite{MalkovY20:hierarchical-navigable-SMG}, Efanna \cite{fu2016efanna}, and FANNG \cite{HarwoodD16:fanng}.
Graph-based methods construct a $k$NN graph offline, which can be regard as a big network graph in high-dimensional space. However, the construction complexity of the exact kNN graph, especially when it comes to large datasets, will increase exponentially. Many researchers turn to building an approximated $k$NN graph, but it is still time consuming. There are two main types of graphs: directed graphs and undirected graphs.


At online search stage, they all use greedy-search algorithm or its variants. While these method require to find the initial point in advance, and the easiest way is to choose randomly. During the search, it can quickly converge from the initial point to the neighborhood of the query point.
But one problem of this method is that it is easily to converge to local optima and result in a low recall. One way to solve this problem is to provide better initialization candidate set for a query point. Instead of using random selection, choosing to use the Navigating node (the approximate medoid of the dataset) and its neighbors as the candidate.
Another method is to try to make the constructed graph monotonous. The edge selection strategy of MRNG, which was first proposed in paper \cite{FuXWC19:nsg}, can ensure that the graph is a Monotonic Search Network (MSNET). Ideally, the search path will iterate from the starting point until reaching the query point and ending, this means that no backtracking occurs during the search.

Because the construction of graphs greatly affects search performance, many researchers focus on constructing index graphs. The fundamental issue is how to choose the neighbors of nodes on the graph. We will introduce two state-of-the-art graph neighbor selection strategies: Relative Neighborhood Graphs (RNG) \cite{Jaromczyk:rng-and-their-relatives} and Monotonic Relative Neighborhood Graphs (MRNG) \cite{FuXWC19:nsg}.
Formally, given two points p and q in $\mathbb{R}^{D}$ space, $B(p, \delta(p, q))$ represents an open sphere where the center is q, and $\delta(p, q)$ is the radius. The ${lune}_{p q}$ is defined as:
\begin{displaymath}
   {lune}_{p q}=B(p, \delta(p, q)) \cap B(q, \delta(p, q))
\end{displaymath}
FANNG \cite{HarwoodD16:fanng} and HNSW \cite{MalkovY20:hierarchical-navigable-SMG} adopt the RNG's edge selection strategy to construct the index. RNG is an edge selection strategy based on undirected graph, and it selects edges by checking whether there is a point in the intersection of two open spheres. In Figure~\ref{Fig:rng-and-mrng}(a), node $p$ has prepared a set of neighbor candidates for selection. If there is no node in $lune_{pr}$, then $p$ and $r$ are linked. Otherwise, there is no edge between $p$ and $r$. Because $r \in {lune}_{p s}$, $s \in {lune}_{p t}$, $t \in {lune}_{p u}$, and $u \in {lune}_{p q}$, there are no edges between p and $s,t,u,q$.
Although RNG can reduce its out-degree to a constant $C_{d}+o(1)$, it does not have sufficient edges to be a MSNET. NSG adopts the MRNG's edge selection strategy to construct the index, which is a directed graph. In Figure~\ref{Fig:rng-and-mrng}(b), $p$ and $r$ are linked to each other because there is no node in $lune_{pr}$. $p$ and $s$ are not linked because $p$ and $r$ are linked and $r \in {lune}_{p s}$. However, $p$ and $t$ are linked because $p$ and $s$ are not linked and $s \in {lune}_{p t}$.
The graph constructed by MRNG is an MSNET. The common purpose of these two graph construction methods is to reduce the average out-degree of the graph so as to make the graph sparse and reduce the search complexity. These interesting selection strategies have achieved attractive results, which makes that many graph-based methods perform well in search time, such as Efanna \cite{fu2016efanna}, KGraph, HNSW and NSG.


\begin{figure}[t]
  \centering
  \subfigure[RNG]{\includegraphics[scale=0.3]{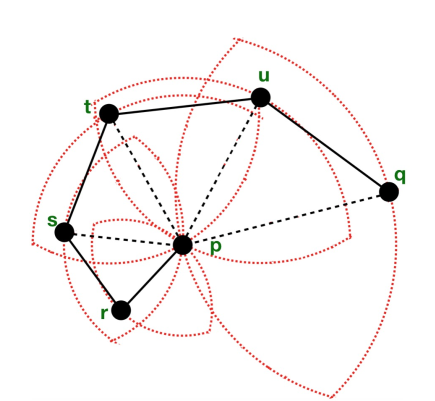}}
  \subfigure[MRNG]{\includegraphics[scale=0.3]{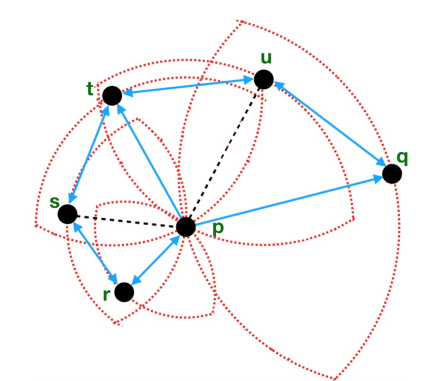}}
  \caption{Two state-of-the-art edge selection strategies}
  \label{Fig:rng-and-mrng}
\end{figure}

\section{Conclusion}
\label{section:conclusion}
This paper takes the first step to shed light on why the graph-based search algorithms work so well in practice and suggests that the clustering coefficient of $K$NN graph is an important measure for the efficiency of these algorithms. Detailed analysis is also conducted to show how clustering coefficient affects the local structure of $K$NN graphs. A few open problems still exists. For example, formal analysis under some simplified data model is important to have more rigorous understanding of the graph search procedure.

\section*{Acknowledgements}
The work reported in this paper is partially supported by NSFC under grant number 61370205, NSF of Xinjiang Key Laboratory under grant number 2019D04024.




\bibliographystyle{IEEEtran}

\bibliography{add}

\end{document}